\crefname{section}{Sec.}{Secs.}
\Crefname{section}{Section}{Sections}
\Crefname{table}{Table}{Tables}
\crefname{table}{Tab.}{Tabs.}
\begin{document}

\def\Blue{\color{blue}}
\def\Purple{\color{purple}}

\def\A{{\bf A}}
\def\a{{\bf a}}
\def\B{{\bf B}}
\def\b{{\bf b}}
\def\C{{\bf C}}
\def\c{{\bf c}}
\def\D{{\bf D}}
\def\d{{\bf d}}
\def\E{{\bf E}}
\def\e{{\bf e}}
\def\f{{\bf f}}
\def\F{{\bf F}}
\def\K{{\bf K}}
\def\k{{\bf k}}
\def\L{{\bf L}}
\def\H{{\bf H}}
\def\h{{\bf h}}
\def\G{{\bf G}}
\def\g{{\bf g}}
\def\I{{\bf I}}
\def\R{{\bf R}}
\def\X{{\bf X}}
\def\Y{{\bf Y}}
\def\OO{{\bf O}}
\def\oo{{\bf o}}
\def\P{{\bf P}}
\def\Q{{\bf Q}}
\def\r{{\bf r}}
\def\s{{\bf s}}
\def\S{{\bf S}}
\def\t{{\bf t}}
\def\T{{\bf T}}
\def\x{{\bf x}}
\def\y{{\bf y}}
\def\z{{\bf z}}
\def\Z{{\bf Z}}
\def\M{{\bf M}}
\def\m{{\bf m}}
\def\n{{\bf n}}
\def\U{{\bf U}}
\def\u{{\bf u}}
\def\V{{\bf V}}
\def\v{{\bf v}}
\def\W{{\bf W}}
\def\w{{\bf w}}
\def\0{{\bf 0}}
\def\1{{\bf 1}}
\def\N{{\bf N}}

\def\AM{{\mathcal A}}
\def\EM{{\mathcal E}}
\def\FM{{\mathcal F}}
\def\TM{{\mathcal T}}
\def\UM{{\mathcal U}}
\def\XM{{\mathcal X}}
\def\YM{{\mathcal Y}}
\def\NM{{\mathcal N}}
\def\OM{{\mathcal O}}
\def\IM{{\mathcal I}}
\def\GM{{\mathcal G}}
\def\PM{{\mathcal P}}
\def\LM{{\mathcal L}}
\def\MM{{\mathcal M}}
\def\DM{{\mathcal D}}
\def\SM{{\mathcal S}}
\def\RB{{\mathbb R}}
\def\EB{{\mathbb E}}

\def\tx{\tilde{\bf x}}
\def\ty{\tilde{\bf y}}
\def\tz{\tilde{\bf z}}
\def\hd{\hat{d}}
\def\HD{\hat{\bf D}}
\def\hx{\hat{\bf x}}
\def\hR{\hat{R}}

\def\Ome{\mbox{\boldmath$\omega$\unboldmath}}
\def\bet{\mbox{\boldmath$\beta$\unboldmath}}
\def\et{\mbox{\boldmath$\eta$\unboldmath}}
\def\ep{\mbox{\boldmath$\epsilon$\unboldmath}}
\def\ph{\mbox{\boldmath$\phi$\unboldmath}}
\def\Pii{\mbox{\boldmath$\Pi$\unboldmath}}
\def\pii{\mbox{\boldmath$\pi$\unboldmath}}
\def\Ph{\mbox{\boldmath$\Phi$\unboldmath}}
\def\Ps{\mbox{\boldmath$\Psi$\unboldmath}}
\def\pss{\mbox{\boldmath$\psi$\unboldmath}}
\def\tha{\mbox{\boldmath$\theta$\unboldmath}}
\def\Tha{\mbox{\boldmath$\Theta$\unboldmath}}
\def\muu{\mbox{\boldmath$\mu$\unboldmath}}
\def\Si{\mbox{\boldmath$\Sigma$\unboldmath}}
\def\Gam{\mbox{\boldmath$\Gamma$\unboldmath}}
\def\gamm{\mbox{\boldmath$\gamma$\unboldmath}}
\def\Lam{\mbox{\boldmath$\Lambda$\unboldmath}}
\def\De{\mbox{\boldmath$\Delta$\unboldmath}}
\def\vps{\mbox{\boldmath$\varepsilon$\unboldmath}}
\def\Up{\mbox{\boldmath$\Upsilon$\unboldmath}}
\def\Lap{\mbox{\boldmath$\LM$\unboldmath}}
\newcommand{\ti}[1]{\tilde{#1}}

\def\tr{\mathrm{tr}}
\def\etr{\mathrm{etr}}
\def\etal{{\em et al.\/}\,}
\newcommand{\indep}{{\;\bot\!\!\!\!\!\!\bot\;}}
\def\argmax{\mathop{\rm argmax}}
\def\argmin{\mathop{\rm argmin}}
\def\vec{\text{vec}}
\def\cov{\text{cov}}
\def\dg{\text{diag}}

\newcommand{\tabref}[1]{Table~\ref{#1}}
\newcommand{\secref}[1]{Sec.~\ref{#1}}
\newcommand{\figref}[1]{Fig.~\ref{#1}}
\newcommand{\lemref}[1]{Lemma~\ref{#1}}
\newcommand{\thmref}[1]{Theorem~\ref{#1}}
\newcommand{\clmref}[1]{Claim~\ref{#1}}
\newcommand{\crlref}[1]{Corollary~\ref{#1}}
\newcommand{\eqnref}[1]{Eqn.~\ref{#1}}

\newcommand{\asuref}[1]{Assumption~\ref{#1}}

\newtheorem{remark}{Remark}
\newtheorem{theorem}{Theorem}
\newtheorem{lemma}{Lemma}
\newtheorem{definition}{Definition}

\newtheorem{assumption}{Assumption}

\newtheorem{proposition}{Proposition}

\newcommand{\doo}{\text{do}}

\newcommand\independent{\protect\mathpalette{\protect\independenT}{\perp}}
\def\independenT#1#2{\mathrel{\rlap{$#1#2$}\mkern2mu{#1#2}}}
%%%%%%%%% TITLE - PLEASE UPDATE
\title{Improving Self-supervision on Out-of-distribution Robustness through Causal Transportability}
\title{Improving Self-supervision Robustness Through Causal Transportability}
\title{Towards Causality: Unlearning Spurious Correlations From Visual Representations}
\title{Causal Transportability from Neural Representations}
\title{Causal Transportability from Undoing Neural Representations Bias}
\title{Neural Representation Unlearning for Out-of-distribution Robustness}
\title{Causal Transportability Improves Out-of-distribution Robustness}
\title{Undoing non-robust features for Robust Generalization}
\title{Causal Transportability for Visual Recognition}
%\title{Causal Transportability for Neural Representation}
% We find ssl model are not generalize well as sp counterparts in distribution.

\newcommand*\samethanks[1][\value{footnote}]{\footnotemark[#1]}

\author{
Chengzhi Mao$^1$\thanks{Equal Contribution.}
\hspace{1em}
Kevin Xia$^1$\samethanks
\hspace{1em}
James Wang$^1$
\hspace{1em} Hao Wang$^2$ \\
\hspace{1em} Junfeng Yang$^1$
\hspace{1em} Elias Bareinboim$^1$\hspace{1em}Carl Vondrick$^1$
\\
$^1$Columbia University\hspace{1em} $^2$Rutgers University\\
{\tt\small \{mcz,kevinmxia,jlw2247,junfeng,eb,vondrick\}@cs.columbia.edu, hoguewang@gmail.com}}

\maketitle

%%%%%%%%% ABSTRACT
\begin{abstract}
Visual representations underlie object recognition tasks, but they often contain both robust and non-robust features. Our main observation is that image classifiers may perform poorly on out-of-distribution samples because spurious correlations between non-robust features and labels can be changed in a new environment.
By analyzing procedures for out-of-distribution generalization with a causal graph, we show that standard classifiers fail because the association between images and labels is not transportable across settings. However, we then show that the causal effect, which severs all sources of confounding, remains invariant across domains. This motivates us to develop an algorithm to estimate the causal effect for image classification, which is transportable (i.e., invariant) across source and target environments. Without observing additional variables, we show that we can derive an estimand for the causal effect under empirical assumptions using representations in deep models as proxies. Theoretical analysis, empirical results, and visualizations show that our approach captures causal invariances and improves overall generalization.
%Visual representations underlie object recognition tasks, but they often contain both robust and non-robust features. Image classifiers perform poorly on out-of-distribution samples because spurious correlations between non-robust features and labels can be changed in a new environment. By analyzing procedures for out-of-distribution generalization with a causal graph, we show that standard classifiers fail because the association between image and labels is not transportable across settings. However, we then show that the causal effect, which severs all sources of confounding, remains invariant across domains. This motivates us to develop an algorithm to estimate the causal effect for image classification, which is transportable (or invariant) across source and target environments. Without observing additional variables, we show that we can derive an estimand for the causal effect under empirical assumptions using representations in deep models as mediators. Theoretical analysis, empirical results, and visualizations show that our approach captures causal invariance and achieves improved generalization.
\end{abstract}

%Since visual representations contain both robust and non-robust features, Image classifiers may perform poorly on out-of-distribution samples because spurious correlations between non-robust features and labels can be changed in a new environment.

%%%%%%%%% BODY TEXT
\section{Introduction}
\label{sec:intro}

Visual representations underlie most object recognition systems today  \cite{kolesnikov2020big, chen2020simple, mocov2, radford2021learning}. By learning from large image datasets, convolutional networks have been able to create excellent visual representations that improve many downstream image classification tasks \cite{chen2020simple, mocov2,lin2020shoestring}. However, central to this framework is the need to generalize to new visual distributions  at inference time \cite{objectnet,ImageNetOverfit, hendrycks2019robustness, imagenetbiased, VLCS, PACS, officehome, domainNet, theory_domain_generalize}. 
%After learning from the Internet, for example, the representation needs to generalize to new environments in the real world. 

The most popular technique to use representations is to fine-tune the backbone model or fit a linear model on the target classification task~\cite{kolesnikov2020big}. Although this approach is effective on in-distribution benchmarks, the resulting classifier also inherits the biases from the target dataset.  Given the nature of how data is collected, essentially every realistic image dataset will have spurious features, which will impact the generalization  of computer vision systems. 
Specifically, the learned representation will encode features that correspond to spurious correlations found in the training data.
%This is because some undesirable features in the representation can correspond to spurious correlations in the training data.
%  and hence linear probes acquire.

In this paper, we investigate  visual representations for object recognition through the lenses of causality \cite{pearl:2k,pearl:mackenzie2018, bar:etal2020}. Specifically, we will revisit  the out-of-distribution image classification task through causal-transportability language  \cite{bareinboim2013general,BareinboimTransport,correa2019statistical}, which will allow us to formally model both confounding and structural invariances shared across disparate environments. In our context, we will show how different environments select a distinct set of robust and non-robust features in constructing the input dataset. The training environment may tend to select specific nuisances with the given category, creating spurious correlations between the nuisances and the predicted class. In fact, standard classifiers will tend to use those spurious correlations, which analytically explains why they result in poor generalization performance to novel target distributions \cite{Rendition, sagawa2019distributionally, wang2019learning}. 

First, we will show that the association between image and label is not in generalizable (in causal language, transportable) across domains. We then note that the causal effect from the input to the output, which severs any spurious correlations, is invariant when the environment changes with respect to the features' distributions. This motivates us to pursue to an image classification strategy that will leverage causal effects, instead of merely the association, and will act as an anchor, providing stability across changing conditions and allowing extrapolation to more likely succeed.
%These representations will in fact serve as mediators for the target causal effect, providing an estimation strategy similar, in spirit, to what is known as the front-door adjustment~\cite[Sec.~3.3.2]{pearl:2k}.
Getting the causal effect for natural images is challenging because there are innumerable unobserved confounding factors within realistic data. Under some relatively mild assumptions, we will be able to extract the robust features from observational data through both causal and deep representations  \cite{bareinboim2013transportability, transportability_soft, DA_CI_ICD, inv_causal_rob, rojas2018invariant, OrphicX,wanyuicml21}, and then use the representations as proxies for identifying the causal effect without requiring observations of the confounding factors. %, as we will show later.

% In our proposed setting, still, it would be generally infeasible to deduce the causal effect from observational data due to identifiability issues.
% With some mild assumptions on the causal graph for object recognition, we are able to extract the robust features from observational data \cite{BareinboimTransport, causaltransport_exp,transportability_soft, DA_CI_ICD, inv_causal_rob, CausalTransfer,OrphicX,wanyuicml21}. These deep representations can serve as mediators for the causal effect, providing an estimation strategy similar to front-door adjustment~\cite[Sec.~3.3.2]{pearl:2k}. Getting the causal effect for natural images is challenging because there are innumerable unobserved confounding factors with realistic data. However, the proposed approach does not require observations of these factors, and instead only requires a mediating mechanism between the image and the labels.
%  by condition on the input variable.

% getting the causal effect is hard when it comes to natural images.

% feasible front-door mediating \cl{proxy}

% This is because the input variable \emph{d-separates} the representation and the output tasks,  allowing us to use only the causal feature from the representation data.  

For both supervised and self-supervised representations, our experimental results show that incorporating the causal structure improves performance when generalizing to new domains. Our method is compatible with many existing representations without requiring re-training, making the approach effective to deploy in practice. Compared to the standard techniques to use representations, our causally motivated approach can obtain significant gain on CMNIST (up to 40\% gain), WaterBird (up to 25\% gain),  ImageNet-Sketch (up to 8\% gain), and ImageNet-Rendition (up to 7\%) datasets. Our work illustrates the importance of causal quantities in out-of-distribution image classification and proposes an effective empirical method that allows the learning of a classifier robust to domain change. Our code is available at \url{https://github.com/cvlab-columbia/CT4Recognition}.

% ImageNet-9 (up to 5\%),

\vspace{-2mm}

\section{Related Work}

\textbf{Causal Inference and Transportability Theory.}   Causal inference provides a principled framework for modeling structural invariances \cite{pearl:2k} and the problem of generalizing, or transporting, across environments and changing conditions \cite{bareinboim2013general,BareinboimTransport,correa2019statistical, bareinboim2013transportability,bareinboim2014transportability,transportability_soft, DA_CI_ICD, inv_causal_rob, rojas2018invariant}. A few image generation works have modeled a causal connection between images and their labels, often assuming the labels are generating the images \cite{rojas2018invariant, heinze2017conditional}, and some prior work studied the connection between causality and specific types of generalizations \cite{arjovsky2020invariant, mao2021generative, causalmatching,yue2021transporting}. Our work studies recognition and reverses this direction, on purpose, since we consider that the images generate the labels through a human-labeling process; this model is detailed in Sec.~3. 
%Even though this may seem a small change, we believe the modeling undertaken in Sec.~3 
%\cite{arjovsky2020invariant, mao2021generative, yue2021transporting} 
To estimate arbitrary causal effects, one can construct a proxy causal-neural models \cite{xia2021cnc}, but in this paper we focus on directly computing and optimizing a specific causal estimand. Existing work on this often assumes one can intervene on the data \cite{mao2021generative, ilse2020selecting_dataaug_causal}  or observe latent confounding factors \cite{IFL, ilse2020selecting_dataaug_causal}. These assumptions are often overly optimistic for natural images, as image data is passive (preventing intervention) and does not allow us to observe additional confounding factors.

% In addition, causal matching across different domains improves generalization~\cite{causalmatching} if multiple domains are available at training time. 

% The robustness and generalization power of systems is dependent on the causal invariances that are shared between the training and evaluation domains . All existing approach requires retraining the model from scratch with new objective, however, the approach to learning just causal signals from pretrained representations are missing.

%   On the theoretical side,  Past work used causal reasoning for domain adaptation, which requires observing the unlabeled test distribution at training time.  

\textbf{Out of distribution Generalization in Vision.}
There are two major types of domain generalization(DG): the multi-source DG and the single-source DG. Multi-source domain generalization has been studied \cite{arjovsky2020invariant, li2018learning, ajakan2014domain, CORAL, blanchard2017domain, zhang2020adaptive}, where the algorithm knows the domain index which the data points are sampled from. A large number of approaches have been proposed to learn classifiers that generalize to out-of-distribution and new environments \cite{VLCS, PACS, officehome, domainNet, Rendition, wang2019learning}. In practice, however, it is often challenging to collect images with accurate domain labels, such as from the Internet. 
Single domain generalization \cite{heinze2017conditional} does not require the domain index assumption, where all training data are assumed to be sampled from the same domain. Still, domain generalization under this setup is more challenging due to lacking the domain information. Existing work achieves generalization via self-supervised learning\cite{JiGen}, anticipating distribution shifting \cite{M-ADA}, creating pseudo domain split\cite{dg_mmld}, adversarial self-challenging \cite{ huang2020self}, and generative data augmentations \cite{mao2021generative}. Recently, the attention operation is also shown to be effective for improving robustness\cite{dosovitskiy2020image, paul2021vision, mao2021discrete}. However, a principled framework for modeling generalization to new environments is still missing.

\section{Problem Formulation -- Image Recognition Through Causal Lenses}

We start by grounding the problem of image recognition in a causal framework to illustrate the key challenges of out-of-distribution generalization compared to its in-distribution counterpart.

\subsection{Structural Modeling of the Classification Task}

\begin{figure}[t]
  \centering
%   \fbox{\rule{0pt}{2in} \rule{0.9\linewidth}{0pt}}
  \includegraphics[width=1\linewidth]{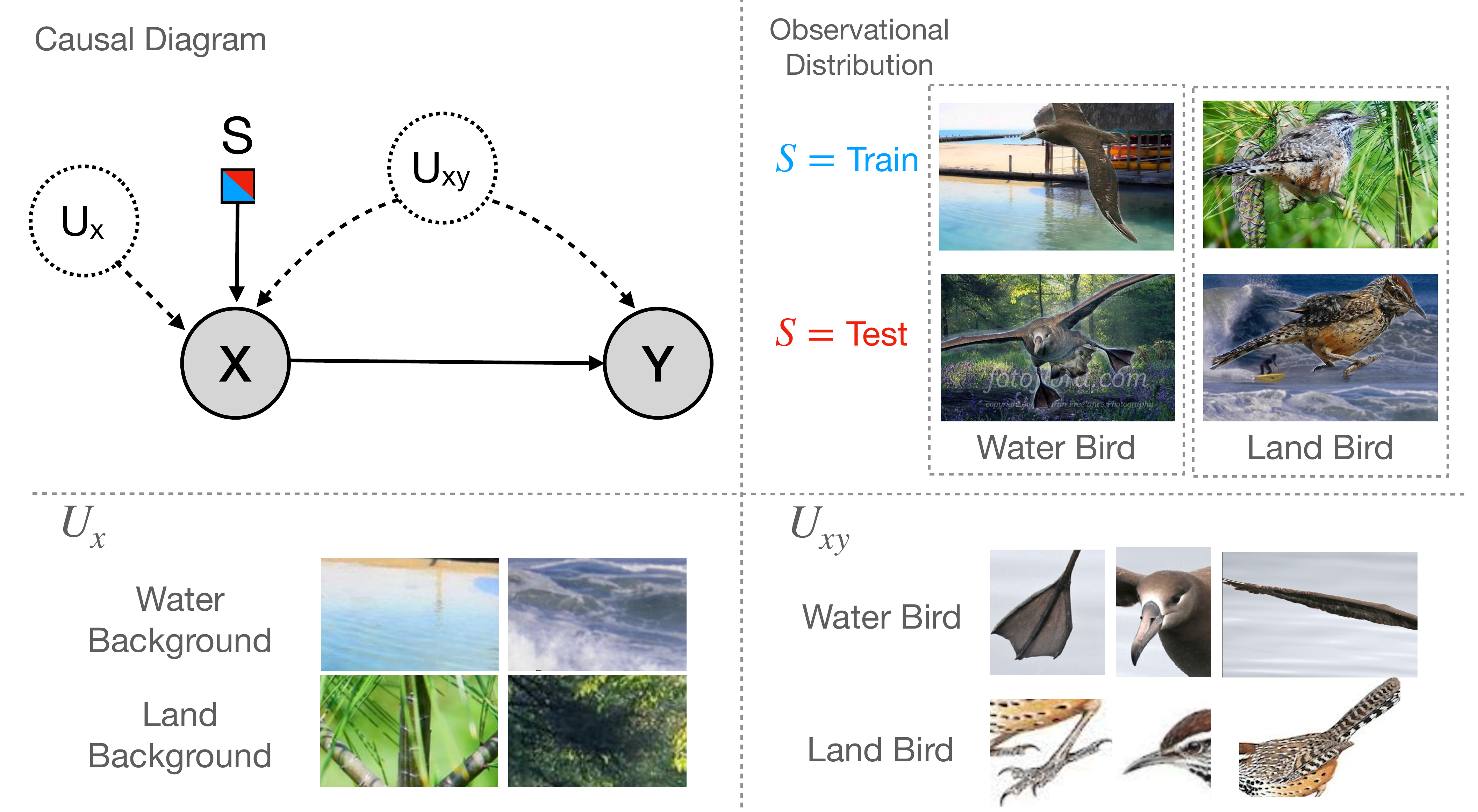}
  \vspace{-6mm}
  \caption{Causal graph for out-of-distribution image classification (top left). Image $X$ is constructed from nuisance features $U_X$ (bottom left) and concept features $U_{XY}$ (bottom right). Label $Y$ is created from $X$ and $U_{XY}$. $S$, the transportability node, points to nodes with changes between domains, where $X$ combines `waterbird' with `water background' during the training ($S = 0$) and `water bird' with `land background' at testing ($S = 1$) (top right).}
   \label{fig:transportability}
   \vspace{-5mm}
\end{figure}

Let the pair $X, Y$ represent the random variables related to images and their labels, and $x, y$ the specific instantiations of the pixels and label. 
Given an input image $X=x$, the goal of the image classification task is to predict its label, $Y=y$. Taking a probabilistic interpretation, a standard strategy is to train a model to learn $P(Y \mid X)$ given data points of $X=x$ and $Y=y$, and then choose a class at inference time via $\argmax_y P(Y=y \mid X=x)$.

We will take a causal approach here, and model the underlying generative process of $X$ and $Y$ using causal semantics. Specifically, we will use a class of generative  processes known as a \textit{structural causal model} (SCM, for short) \cite[Ch.~7]{pearl:2k}. Each SCM $M$ encodes a 4-tuple $\langle V = \{X, Y\}, U = \{U_X, U_{XY}\}, \mathcal{F} = \{f_X, f_Y\}, P(U) \rangle$, where $V$ is the set of observed variables, in this case, the image ($X$) and its label ($Y$); $U$ represents unobserved variables encoding external sources of variation not captured in the image and the label themselves (more details next); $\mathcal{F}$ is the set of mechanisms $\{f_X, f_Y\}$, which determine the generative processes of $X$ and $Y$ such that $X \gets f_X(U_X, U_{XY})$ and $Y \gets f_Y(X, U_{XY})$; $P(U)$ represents a probability distribution over the unobserved variables. 

In particular, we call $U_{XY}$ the ``concept vector'', as it represents all underlying factors that produce both the core features of the object in image $x$ and its label, $y$. For example, one instantiation of $U_{XY} = u_{XY}$ may encode the concepts of ``flippers'' and ``wing,'' which are translated into an image of a ``waterbird'' when passed into $f_X$. $U_X$ represents nuisance factors, such as the background, that affect the generation process of the image. Likewise, $f_Y$ may represent someone who is labeling image $x$ and will have a conceptual understanding of waterbird through $u_{XY}$. One natural, albeit critical observation, is that if $f_X$ selects the color ``flippers'' and the background ``water'' more likely together, there would be a strong association between these two concepts, given the image.  Together, the underlying distribution over $P(U_{XY}, U_X)$ combined with functions $f_X$ and $f_Y$ induce a distribution over $P(X, Y)$, which is how the data is generated. The SCM $M$ is almost never observable, and it is in general, in a formal sense, impossible to recover the structural functions ($\mathcal{F}$) and probability over the exogenous variables ($P(U)$) from observational data alone ($P(V)$) \cite[Thm.~1]{bar:etal2020}. 

\subsection{Modeling In vs.~Out-of-Distribution Generalization through Transportability}

When training a classifier for in-distribution problems, both training and test data come from the same domain. In the out-of-distribution case, also known as the \emph{transportability} problem in the causal inference literature \cite{bareinboim2013general,BareinboimTransport,correa2019statistical}, training data may come from a domain $\pi$ that differs from the test domain, $\pi^*$. We assume that the labeling process and underlying concepts are consistent across domains (i.e. $f_Y$ and $P(U_{XY})$ remain the same in both settings), but the generative process of the image $X$ may change (i.e. $f^*_X$ and $P^*(U_X)$ may differ from $f_X$ and $P(U_X)$, respectively).

In general, we do not know the true underlying mechanisms $f_X$, $f^*_X$, and $f_Y$, nor can we observe the immeasurably large space of $P(U_X, U_{XY})$. However, we can represent the structural invariances across domains by leveraging a graphical representation shown in Fig.~\ref{fig:transportability}. The disparities across domains $\pi$ and $\pi^*$ are usually modeled by a transportability node called $S$ \cite{BareinboimTransport}, which can be interpreted as a switch across domains; i.e., $f_X$ will be active if $S = 0$, and $f^*_X$ otherwise.  For concreteness, consider two different categories of birds, the waterbird and the landbird, between which we want to discriminate. Both bird categories have their own underlying features $U_{XY}$ that cause an annotator to label them as a waterbird or landbird. However, while waterbirds are typically paired with water backgrounds in images generated in the source domain ($S = 0$), this factor may change in the target domain ($S = 1$), where waterbirds are now commonly shown in land backgrounds.

\begin{figure}[t]
  \centering
%   \fbox{\rule{0pt}{2in} \rule{0.9\linewidth}{0pt}}
  \includegraphics[width=0.8\linewidth]{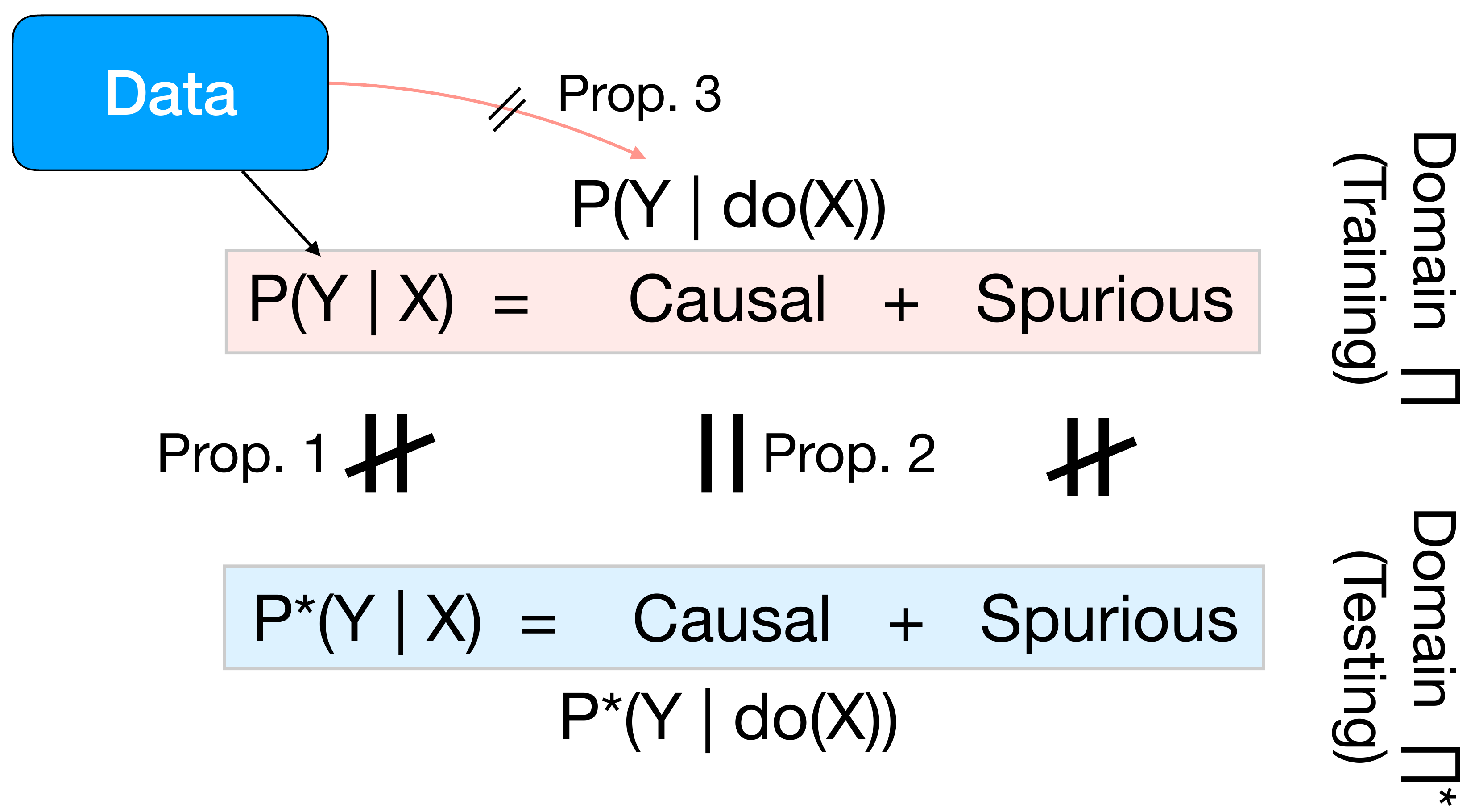}
  \vspace{-3mm}
  \caption{Visualization comparing quantities between domains $\pi$ and $\pi^*$. Prop.~\ref{prop:assoc-transport} shows that $P(Y \mid X)$, which contains both causal and spurious information, does not match $P^*(Y \mid X)$.  Prop.~\ref{prop:causal-transport} shows that the causal effect is invariant across settings, i.e., $P(Y \mid \doo(X)) = P^*(Y \mid \doo(X))$. However, Prop.~\ref{prop:id} shows that unlike $P(Y \mid X)$, $P(Y \mid \doo(X))$ is not identifiable from  $\pi$-data .}
   \label{fig:propositions}
   \vspace{-5mm}
\end{figure}

In the in-distribution case, the more traditional strategy of learning $P(Y | X)$ is logical, in the sense that it leverages all possible information to maximize the chance of predicting the correct label. However, given the way the data generation process is modeled, it is easy to see why this same strategy fails in the out-of-distribution case. Since only data from domain $\pi$ is given, we can only train a model on $P(Y \mid X)$, which does not adequately model $P^*(Y \mid X)$.

\begin{proposition}
    \label{prop:assoc-transport}
    Let $M$ and $M^*$ be the two underlying SCMs representing the source and target domains, $\pi$ and $\pi^*$, and compatible with the assumptions represented in the causal graph in Fig.~\ref{fig:transportability}. Then, $P^*(Y \mid X) \neq P(Y \mid X)$.
\end{proposition}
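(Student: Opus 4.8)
The plan is to expand the source-domain conditional over the only other parent of $Y$ in Fig.~\ref{fig:transportability} --- the concept vector $U_{XY}$ --- and then separate the factors that are invariant across $\pi$ and $\pi^*$ from those that are not. From the mechanism $Y \gets f_Y(X, U_{XY})$ we have
\begin{equation}
P(Y \mid X) = \sum_{u_{XY}} P(Y \mid X, u_{XY})\, P(u_{XY} \mid X),
\end{equation}
and symmetrically $P^*(Y \mid X) = \sum_{u_{XY}} P^*(Y \mid X, u_{XY})\, P^*(u_{XY} \mid X)$. The first factor is governed by $f_Y$ alone, which the modeling assumptions keep fixed across domains, so $P^*(Y \mid X, u_{XY}) = P(Y \mid X, u_{XY})$; any discrepancy between the two conditionals must therefore be traced entirely to the posterior $P(u_{XY} \mid X)$.

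Next I would show this posterior is not transportable in general. By Bayes' rule $P(u_{XY} \mid X) \propto P(X \mid u_{XY})\, P(u_{XY})$, where the concept prior $P(u_{XY})$ is shared, but $P(X \mid u_{XY})$ is the pushforward of $P(U_X)$ through $f_X(\cdot, u_{XY})$ --- precisely the two objects that the transportability node $S$ is permitted to change. Since $f^*_X$ and $P^*(U_X)$ need not equal $f_X$ and $P(U_X)$, the likelihood $P(X \mid u_{XY})$, the evidence $P(X)$, and hence the posterior all shift, and the shift propagates through the sum above to give $P(Y \mid X) \neq P^*(Y \mid X)$.

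To turn this ``generically nonequal'' argument into an explicit witness, I would instantiate the waterbird/landbird SCM of Fig.~\ref{fig:transportability}: let $U_{XY}$ encode bird type and $U_X$ the background, with $f_X$ pairing waterbirds with water and landbirds with land in $\pi$, while $f^*_X$ swaps the backgrounds in $\pi^*$. For an image $x$ of a bird on water, $P(u_{XY} = \text{waterbird} \mid x)$ is large under $\pi$ but small under $\pi^*$, so the two induced label conditionals at $X = x$ genuinely disagree, exhibiting a concrete pair $(M, M^*)$ that realizes the strict inequality.

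The subtle point --- and the main thing to be careful about --- is that as literally written the proposition asserts ``$\neq$,'' yet degenerate models break it: if $f_Y$ ignores $U_{XY}$, or if the altered part of the image mechanism carries no label-relevant information, then $P(u_{XY} \mid X)$ is either irrelevant or unchanged and the two conditionals coincide. So the real content is non-transportability \emph{in general}, and the careful step is to state the minimal structural hypotheses --- $U_{XY}$ genuinely entering $f_Y$, and $S$ genuinely moving $P(u_{XY} \mid X)$ --- under which the inequality is forced, rather than presenting it as a universal identity.
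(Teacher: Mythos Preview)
Your proposal is correct and tracks the paper's own argument, which is given informally in the text following the proposition rather than as a separate proof: the paper observes that conditioning on the collider $X$ opens the path $S \to X \leftarrow U_{XY} \to Y$, so $Y \not\independent S \mid X$ and hence $P(Y \mid X, S=0) \neq P(Y \mid X, S=1)$. Your decomposition over $U_{XY}$ and Bayes-rule analysis of $P(u_{XY} \mid X)$ is precisely the probabilistic unpacking of that d-connection statement, and your closing caveat about degeneracy (the strict ``$\neq$'' really meaning ``not transportable in general'') is a point the paper leaves implicit.
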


In words, the classifier represented by the quantity $P(Y \mid X)$, in $\pi$, is not \emph{transportable} across settings and cannot be used to make statements about $P^*(Y \mid X)$, even when everything aside from the mechanism of $X$ ($f_X$) remains invariant (including the labeler $f_Y$). Intuitively, this is due to the unobserved confounding, or spurious effects, between $X$ and $Y$ through $U_{XY}$. By conditioning on $X$, the variables $Y$ and $S$ become d-connected via the path through $U_{XY}$, i.e. $P(Y \mid X, S = 0) \neq P(Y \mid X, S = 1)$. This result is also shown pictorially in Fig.~\ref{fig:propositions}. 

In addition to the spurious effects,  $X$ and $Y$ still co-vary due to the direct link $X \rightarrow Y$. In other words, the labeling process can be seen as moving unobserved co-variation that goes through $U_{xy}$ to the observed link  $X \rightarrow Y$. These variations are known as the causal effect of $X$ on $Y$. Intuitively, one can think of the causal effect $P(Y \mid \doo(X))$ as describing the interventional world where arrows towards $X$ can be thought of as removed. This includes the $S$-node, which no longer has an influence on $X$ when $X$ is forced to take a certain value, say $x$. This is promising since if a quantity is not affected by $S$, that implies that it is invariant across domains. As shown next, this is indeed the case with $P(Y \mid \doo(X))$.

\begin{proposition}
    \label{prop:causal-transport}
        Let $M$ and $M^*$ be the two underlying SCMs representing the source and target domains, $\pi$ and $\pi^*$, and compatible with the causal graph in Fig.~\ref{fig:transportability}. Then, $P^*(Y \mid \doo(X)) = P(Y \mid \doo(X))$.
\end{proposition}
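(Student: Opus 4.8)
The plan is to exploit the fact that the intervention $\doo(X=x)$ deletes every arrow pointing into $X$ in the graph of Fig.~\ref{fig:transportability}, and in particular the edge $S \rightarrow X$ through which the two domains differ. Concretely, I would first write the interventional distribution induced by the source SCM $M$: in the submodel obtained by fixing $X=x$, the label is generated by $Y \gets f_Y(x, U_{XY})$ with $U_{XY} \sim P(U_{XY})$, so
\[
P(Y = y \mid \doo(X=x)) = \sum_{u_{XY}} \mathbb{1}\{\, y = f_Y(x, u_{XY}) \,\}\, P(u_{XY}).
\]
Performing the same computation in the target SCM $M^*$ yields $P^*(Y = y \mid \doo(X=x)) = \sum_{u_{XY}} \mathbb{1}\{\, y = f_Y(x, u_{XY}) \,\}\, P^*(u_{XY})$, because the intervention overrides $f_X$ (resp.\ $f^*_X$) and the nuisance source $U_X$ (resp.\ $P^*(U_X)$) entirely — the nuisance mechanism simply drops out of the post-interventional world, which is precisely why it cannot carry any cross-domain discrepancy into $Y$.

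Next I would invoke the structural invariances posited in Sec.~3.2: the labeling mechanism is shared, $f_Y = f^*_Y$, and the concept distribution is shared, $P(U_{XY}) = P^*(U_{XY})$. Substituting these two equalities into the two displayed expressions makes them term-by-term identical, giving $P^*(Y \mid \doo(X)) = P(Y \mid \doo(X))$. An equivalent, more graphical route treats $S$ as a selection node and argues in the mutilated graph $G_{\overline{X}}$: every directed path from $S$ to $Y$ must pass through $X$ (the only child of $S$ is $X$, and the confounder $U_{XY}$ as well as the nuisance $U_X$ reach $Y$ only via paths that also touch $X$), so deleting the arrows into $X$ leaves $S$ d-separated from $Y$, i.e.\ $(Y \indep S)_{G_{\overline{X}}}$. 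Rule~1 of do-calculus \cite{pearl:2k} then gives $P(y \mid \doo(x), S) = P(y \mid \doo(x))$, and instantiating $S$ at the source and target values recovers the claim.

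The main point to be careful about is the bookkeeping of $S$'s outgoing edges (and of $U_X$): one must verify that the modeling assumptions genuinely confine all cross-domain variation to mechanisms ``downstream of'' the cut made by $\doo(X)$ — that $S$ has no edge into $Y$, into $U_{XY}$, or into $f_Y$ — so that once $S \rightarrow X$ is severed nothing domain-specific remains on any path into $Y$. Under the graph of Fig.~\ref{fig:transportability} this holds immediately, but it is the step at which the proposition would fail (for instance if the labeler were permitted to change across environments, or if a back-door path from $S$ to $Y$ bypassing $X$ existed). The remaining manipulations — writing the truncated factorization and applying a single do-calculus rule — are routine.
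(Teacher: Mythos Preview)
Your proposal is correct and matches the paper's reasoning: the paper argues (informally, in the text surrounding the proposition) that $\doo(X)$ removes all incoming arrows into $X$, including $S \to X$, so $S$ no longer influences $Y$ and the interventional quantity is domain-invariant --- exactly your graphical/do-calculus route via $(Y \indep S)_{G_{\overline{X}}}$. Your explicit SCM computation (writing $P(y \mid \doo(x)) = \sum_{u_{XY}} \mathbb{1}\{y = f_Y(x, u_{XY})\} P(u_{XY})$ and invoking the invariance of $f_Y$ and $P(U_{XY})$) is a welcome, more constructive complement that the paper does not spell out, but it leads to the same conclusion by the same underlying mechanism.
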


Regardless of the change in the mechanism of $f^*_X$ and $P^*(U_X)$, it is guaranteed that the causal effect of $X$ on $Y$ will remain invariant across $\pi$ and $\pi^*$. In causal language, $P^*(Y \mid \doo(X))$ is transportable across settings.

\subsection{Identifiability}

Given that the causal effect is invariant across domains, we consider using $P(Y \mid \doo(X))$ as a surrogate for $P^*(Y \mid X)$ for classification purposes (out-of-distribution), instead of the classifier trained in the source, $P(Y \mid X)$. That leaves the question of how to identify (and then estimate) this quantity given observational data, $P(X, Y)$. Unfortunately, this is still not possible in the general case.

\begin{proposition}
    \label{prop:id}
    Let $M$ be the SCM representing domain $\pi$ and described through the causal diagram $G$ in Fig.~\ref{fig:transportability}. The interventional distribution $P(Y \mid \doo(X))$ is not identifiable from $G$ and the observational distribution $P(X, Y)$.
\end{proposition}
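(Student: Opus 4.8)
The plan is to establish non-identifiability in the classical single-domain sense by exhibiting two SCMs, both compatible with the diagram $G$ of Fig.~\ref{fig:transportability}, that induce the \emph{same} observational law $P(X,Y)$ but \emph{different} interventional laws $P(Y \mid \doo(X))$; this suffices, since it shows $P(Y \mid \doo(X))$ cannot be written as a functional of $(G, P(X,Y))$. Before constructing the counterexample, I would reduce the graph: identifiability from $\pi$-data concerns only the source domain, so the node $S$ can be held at its source value (it is a root whose only child is $X$ and therefore opens no back-door path between $X$ and $Y$), and $U_X$ is an exogenous parent of $X$ alone. Hence the only structurally relevant feature is that $X$ and $Y$ share the hidden common cause $U_{XY}$ while also being joined by the direct edge $X \to Y$; latent-projecting onto $\{X,Y\}$, $G$ is the ``bow arc'' $X \to Y$, $X \leftrightarrow Y$, the canonical non-identifiable structure.

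Concretely, I would take all observed variables binary, $P(U_{XY})$ uniform on $\{0,1\}$, and $U_X$ an irrelevant fair coin. For $M_1$ set $f_X(U_X, U_{XY}) = U_{XY}$ and $f_Y(X, U_{XY}) = X$; for $M_2$ set $f_X(U_X, U_{XY}) = U_{XY}$ and $f_Y(X, U_{XY}) = U_{XY}$. Each mechanism uses only arguments permitted by its parent set in $G$, so both models are compatible with the diagram. The next step is the observational check: in both models $X = U_{XY}$ and $Y = U_{XY}$ as functions of the exogenous variables, whence $P^{M_1}(X = x, Y = y) = P^{M_2}(X = x, Y = y) = \tfrac12\,\mathbf{1}\{x = y\}$, so the two SCMs are observationally indistinguishable.

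Then I would compute the effects under $\doo(X = 1)$. In $M_1$, cutting the arrows into $X$ and setting $X = 1$ gives $Y \gets X = 1$ deterministically, so $P^{M_1}(Y = 1 \mid \doo(X = 1)) = 1$. In $M_2$, the same surgery leaves $Y = U_{XY}$ with $U_{XY}$ still uniform, so $P^{M_2}(Y = 1 \mid \doo(X = 1)) = \tfrac12$. Since $1 \neq \tfrac12$, the quantity $P(Y \mid \doo(X))$ is not determined by $(G, P(X,Y))$, i.e.\ it is not identifiable. I do not expect a real obstacle here; the only care needed is the bookkeeping that the ancillary nodes $S$ and $U_X$ introduce no additional adjustment set that could rescue identification — which is exactly what the completeness of the $\doo$-calculus / ID algorithm guarantees for the bow arc \cite{pearl:2k} — and, if one wants every edge of $G$ to be ``active,'' a small perturbation of $M_2$'s $f_Y$ preserves both conclusions.
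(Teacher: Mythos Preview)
Your proposal is correct. The paper does not supply a formal proof of this proposition; it merely remarks, immediately after the statement, that non-identifiability means there are multiple SCMs consistent with $P(X,Y)$ that induce different $P(Y \mid \doo(X))$, and your bow-arc counterexample is exactly the standard way to make that remark precise.
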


In words, non-identifiability suggests that there are multiple SCMs that are consistent with $P(X, Y)$ and that induce different distributions $P(Y \mid \doo(X))$. This means that $P(X, Y)$ is too weak, in some sense, and it is too under-specified to allow one to deduce $P(Y \mid \doo(X))$. Additional assumptions are needed to identify (and then estimate) this causal effect.

In fact, some prior work has assumed that \textit{all} back-door variables can be observed~\cite[Sec.~3.3.1]{pearl:2k}, which means that all the variations represented originally in the unobserved confounder $U_{xy}$
are, in some sense, captured by the model. When additional domain index information is available (e.g. styles of the images), prior works such as IRM \cite{arjovsky2020invariant}, MLLD \cite{dg_mmld}, and DANN \cite{ajakan2014domain} have performed adjustment-like operations with the domain index. In most image datasets that contain only images and their labels, the assumption that all back-door variables (and sources of co-variation) are observable is overly stringent. Even when additional data is available, it is unlikely that such data contains all possible variations encapsulated by the concept vector. Our goal now is to identify the effect of $X$ on $Y$ without having knowledge of the back-door variables. %, which is discussed next. 

%Visualization comparing quantities between domains $\pi$ and $\pi^*$. Prop.~\ref{prop:assoc-transport} shows that $P(Y \mid X)$, which contains both causal and spurious information, does not match $P^*(Y \mid X)$.  Prop.~\ref{prop:causal-transport} shows that the causal effect is invariant across settings, i.e., $P(Y \mid \doo(X)) = P^*(Y \mid \doo(X))$. However, Prop.~\ref{prop:id} shows that unlike $P(Y \mid X)$, $P(Y \mid \doo(X))$ is not identifiable from  $\pi$-data .
%\input{sec3_proofs}

\section{Neural Representation Approach to Deriving a Causal Estimand}

Following the previous understanding that  $P(Y \mid \doo(X))$ is a suitable proxy for the classifier in the target domain, $P^*(Y | X)$, we discuss in this section sufficient assumptions that would allow us to estimate such a quantity.  Further, we discuss methods that could allow the practical realizability of these assumptions in the context of image recognition.

%\subsection{Theoretical Derivation}

To realize the goal of estimating the target causal effect,  we build two neural network models: $\hat{P}(R \mid X)$, which generates visual representations $R$ from images $X$, and $\hat{P}(Y \mid R, X)$, which uses both $R$ and $X$ to classify $Y$. We make the following  assumptions about the structure of image $X$ and the properties of these networks:

\begin{assumption}[\textbf{Decomposition}]\label{assu:d1}
Each image $X$ can be decomposed into causal factors $Z$ and spurious factors $W$ (i.e. $X = (Z, W)$), and the generative process follows the causal graph in Fig.~\ref{fig:frontdoor}.
\end{assumption}

One may be tempted to surmise that this is an innocent assumption, but it does make strong claims about the generative process. The interpretation is that $W$ contains all of the lower level signals or patches of the image, which may contain concepts confounding with $Y$. On the other hand, $Z$ refines these patches into interpretable factors, which is what is visually used by the labeler. Since $Z$ is a direct function of $W$, these factors are not confounded. For example, while $W$ might include various pieces of information such as patches of blue in the water or texture of feathers, $Z$ refines all of these signals into factors such as ``waterbird shape,'' which is then used by the labeler to choose ``waterbird'' for $Y$. While this assumption may not be true in all settings, we believe that many practical, image settings can be approximated by this assumption.

% One may be tempted to surmise that this is an innocent assumption, but it does make strong claims about the generative process. Firstly, 
% it says that the the causal ($Z$) and the confounding ($W$) factors that constitute the image are unconfounded. It may be non-trivial to disentangle those factors in practice, but it is reasonable to expect in practice that in terms of labeling ($f^*_y$), one uses the fact that there is a bird in the scene to add a label ``bird'', which is a different thing than ``water,'' which is just a spurious factor in this context. 
% Secondly, another critical aspect of this assumption is about the order between these factors, in particular, that the generative process goes from $W$ to $Z$. We believe this is possibly reasonable in the context of image data since the spurious factors are typically background factors, so the dependence between $Z$ and $W$ can be explained by $W$ being a more likely environment for $Z$. \cl{Considering the process of forming an image, one first is the lower leverl signals and attributes, such as the ``color'' and ``texture'', which will then form the shape of the ``bird''.} 
% Again, while this assumption may not be true in all settings, we believe that many practical, image settings can be approximated by this assumption.

% Considering the process of ``sampling'' from different realities when collecting the data, one first is in the ``water'', which will then make sampling a ``waterbird'' more likely.

\begin{figure}
\centering
  \includegraphics[width=0.7\linewidth]{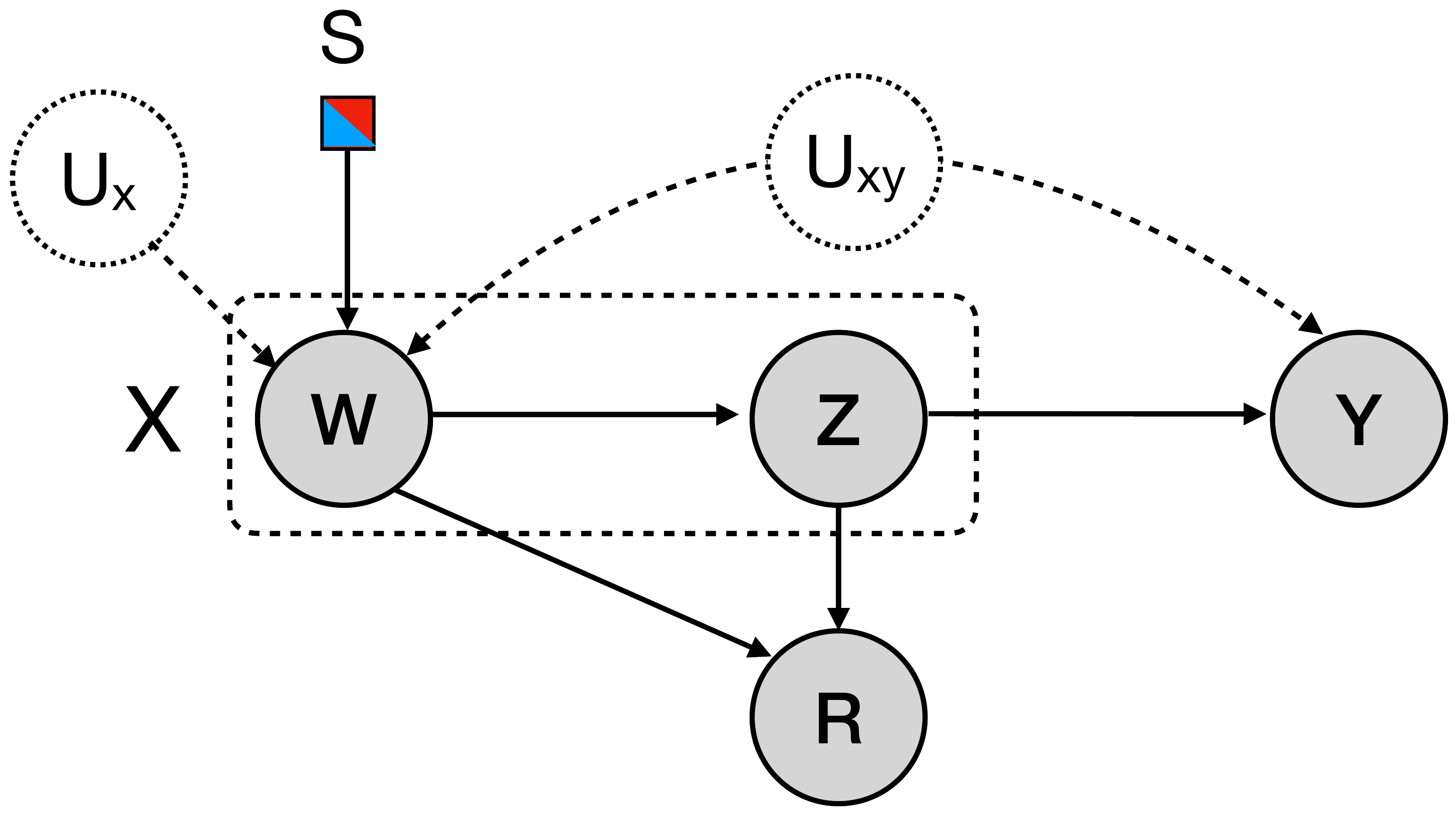}
  \vspace{-4mm}
  \caption{Expanded causal model with decomposition of image $X$ and representation $R$. Gray nodes denote observed variables.}  \label{fig:frontdoor}
  \vspace{-5mm}
\end{figure}

\begin{assumption}[\textbf{Sufficient representation}]\label{assu:s2}
The neural representations $R \sim \hat{P}(R \mid Z, W)$ are learned such that they do not lose information w.r.t. $Z$. In words, for two samples $r_1$ and $r_2$ from $\hat{P}(R \mid z_1, w_1)$ and $\hat{P}(R \mid z_2, w_2)$, respectively, $r_1 \neq r_2$ if $z_1 \neq z_2$.
\end{assumption}

This is a somewhat more technical assumption, which says that the neural representation has enough capacity to represent unambiguously the causal factors.  
This assumption should hold in general given a proper choice of model for $\hat{P}(R \mid X)$, which we further elaborate in Sec.~\ref{sec:FDconstruction}.

% \begin{assumption}[\textbf{Selective prediction}]\label{assu:s3}
% Consider two images $X = x = (z, w)$ and $X' = x' = (z', w')$, a neural representation $\hat{P}$, and the true labeling probability $P$. $\hat{P}$ will be s.t. $\hat{P}(Y=y| X=x, X'=x') = P(Y | z, w')$. 
% %Our network satisfies that  $\hat{P}(Y=y|R \sim \hat{P}(R \mid z, w), X=(z', w')) = P(Y=y|z, w')$, where the prime indicates an arbitrary image.
% \end{assumption}

% The details on how to select the specific architectural design for constructing $\hat{P}(Y \mid R, X)$ that satisfies this assumption is discussed in more details in Sec.~\ref{sec:PYconstruction}. Still, in words, this assumption says that once inputted with two images  $x$ and $x'$ (in reality their representations, $r$ and $r'$), the network will make the same prediction $y$ as if it was the true labeller when inputted with the causal feature $z$, from the first image, and the spurious feature $w'$, from the second image. 

\begin{assumption}[\textbf{Selective prediction}]\label{assu:s3}
Consider two images of $X$, $x = (z, w)$ and $x' = (z', w')$, with neural output $\hat{P}$, and the true labeling probability $P$. Let $R=r$ be a representation of $x$, sampled from $\hat{P}(R \mid x)$. Then, $\hat{P}(Y=y \mid R=r, X=x') = P(y \mid z, w')$. 
%Our network satisfies that  $\hat{P}(Y=y|R \sim \hat{P}(R \mid z, w), X=(z', w')) = P(Y=y|z, w')$, where the prime indicates an arbitrary image.
\end{assumption}

The details on how to select the specific architectural design for constructing $\hat{P}(Y \mid R, X)$ that satisfies this assumption is discussed in more detail in Sec.~\ref{sec:PYconstruction}. Still, in words, the assumption says that once inputted with two images  $x$ and $x'$ ($x$ in its representation form, $r$), the network will make the same prediction $y$ as if it were the true labeler when inputted with the causal feature $z$, from the first image, and the spurious feature $w'$, from the second image.

% \begin{assumption}[\textbf{Selective predictor}]\label{assu:s3}
% Our network satisfies that  $\hat{P}(Y=y|R \sim \hat{P}(R \mid z, w), X=(z', w')) = P(Y=y|z, w')$, where the prime indicates an arbitrary image.
% \end{assumption}

% In words, the network $\hat{P}(Y \mid R=r, X=x')$ takes both a representation $r$ and an image $x' = (z', w')$ as input, but $r$ itself was derived from an image $x = (z, w)$. This means that $\hat{P}(Y \mid r, x')$ potentially is using information from two conflicting sources of $Z$ and $W$. The assumption states that the network will be biased to use the causal portion of $r$ and the spurious portion of $x'$ to predict $Y$. This assumption can be justified with the proper architectural design for $\hat{P}(Y \mid R, X)$, discussed in Sec.~\ref{sec:PYconstruction}.

% When the above three assumptions hold, then we can estimate the causal effect of $P(Y|do(X=x))$ in Figure~\ref{fig:frontdoor} through the following result: 
Putting all these observations together, we now state one of the main results of the paper:

\begin{theorem}[\textbf{Causal Identification}]\label{thm:causal-fd}
Given the assumptions about the generative process encoded in the causal graph in Fig.~\ref{fig:frontdoor} together with assumptions 1, 2, 3, the causal effect can be computed using neural representation $R$ via  $P(Y=y|do(X=x))=\sum_r \hat{P}(r|x) \sum_{x'} \hat{P}(y|r, x')P(x')$.
\end{theorem}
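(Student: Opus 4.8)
The plan is to establish the identity in two stages. First, I would use only the causal graph of Fig.~\ref{fig:frontdoor} to reduce the interventional quantity $P(Y=y\mid\doo(X=x))$ to a ``labeler-level'' estimand; then I would show that the neural quantity on the right-hand side evaluates to exactly that estimand, invoking Assumptions~\ref{assu:d1}, \ref{assu:s2}, and~\ref{assu:s3}.

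For the first stage, write $x=(z,w)$ as in Assumption~\ref{assu:d1}. In Fig.~\ref{fig:frontdoor} the spurious factors $W$ are a parent of the causal factors $Z$, and the only back-door path from $Z$ to $Y$ runs through $W$ and the latent concept confounder; hence $\{W\}$ should satisfy the back-door criterion relative to $(Z,Y)$. Moreover, since the labeler reads only $Z$, once $Z$ is set by intervention $W$ has no directed path left to $Y$, so $\doo(X=x)=\doo(Z=z,W=w)$ has the same effect on $Y$ as $\doo(Z=z)$. Carrying out the do-calculus steps (equivalently, a front-door-style argument with $Z$ as the mediator between the confounded $W$ and $Y$, and $R$ as a stochastic downstream copy of $Z$), I expect to obtain
\[
P(Y=y\mid\doo(X=x))=\sum_{w'} P(y\mid z,w')\,P(w'),
\]
where $P(y\mid z,w')$ is to be read as the probability the true labeler assigns to $y$ when shown causal content $z$ and spurious content $w'$. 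A subtlety worth flagging: because $Z$ is a deterministic refinement of $W$, we have $\sum_{z'}P(z',w')=P(w')$, and it is this functional relationship, not a textbook front-door on the image itself, that makes the $z'$ index drop out.

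For the second stage, I would expand the right-hand side with $x'=(z',w')$ and fix any $r$ in the support of $\hat{P}(R\mid x)$. By Assumption~\ref{assu:s2} such an $r$ determines the causal part $z$ of $x$ unambiguously, so Assumption~\ref{assu:s3} applies and gives $\hat{P}(Y=y\mid R=r,X=x')=P(y\mid z,w')$ for every $x'$. Substituting, $\sum_{x'}\hat{P}(y\mid r,x')P(x')=\sum_{z',w'}P(y\mid z,w')\,P(z',w')=\sum_{w'}P(y\mid z,w')\,P(w')$, which does not depend on $r$; pulling this constant out of the outer sum and using $\sum_r\hat{P}(r\mid x)=1$ yields $\sum_r\hat{P}(r\mid x)\sum_{x'}\hat{P}(y\mid r,x')P(x')=\sum_{w'}P(y\mid z,w')\,P(w')$, which matches the first-stage expression and closes the argument.

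The hardest part will be the first stage. I need to (i) read off the exact edge set of Fig.~\ref{fig:frontdoor} and verify the back-door/front-door conditions, and (ii) cope with the fact that, since $Z$ is a function of $W$, a mismatched pair $(z,w')$ is a probability-zero event under the observational law, so $P(y\mid z,w')$ only makes sense through a counterfactual/labeler reading --- precisely the reading Assumption~\ref{assu:s3} supplies. Getting the graphical derivation to agree with this semantics, and checking that Assumptions~\ref{assu:s2} and~\ref{assu:s3} interlock (sufficiency of $R$ is what licenses treating $\hat{P}(y\mid r,x')$ as a function of ``the $z$ behind $r$''), is where the real care is needed; the summation bookkeeping in the second stage is then mechanical.
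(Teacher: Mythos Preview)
Your proposal is correct and follows essentially the same route as the paper's proof: reduce $P(y\mid\doo(x))$ to $P(y\mid\doo(z))$ via Assumption~\ref{assu:d1} and do-calculus, apply the back-door adjustment over $W$ to obtain $\sum_{w'}P(y\mid z,w')P(w')$, marginalize in a dummy $z'$ to rewrite this as a sum over $x'=(z',w')$, and then invoke Assumptions~\ref{assu:s2} and~\ref{assu:s3} to replace $P(y\mid z,w')$ by $\hat{P}(y\mid r,x')$ for any $r\sim\hat{P}(R\mid x)$, finally averaging over $r$. Your additional commentary on the probability-zero issue for mismatched $(z,w')$ pairs and on how Assumption~\ref{assu:s3} supplies the needed counterfactual/labeler semantics is a welcome clarification that the paper leaves implicit, but it does not change the structure of the argument.
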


\begin{proof}
We first derive the following steps.
\begin{align*}
    & P(y \mid \doo(x)) \\
    &= P(y \mid \doo(z, w)) & \text{Assumption~\ref{assu:d1}} \\
    &= P(y \mid \doo(z)) & \text{Do-Calculus Rule 3 \cite{pearl:2k}} \\
    &= \sum_{w'} P(y \mid z, w') P(w') & \text{Backdoor Criterion} \\
    &= \sum_{z', w'} P(y \mid z, w')P(z', w') & \text{Marginalization} 
\end{align*}
By Assumptions~\ref{assu:s2} and \ref{assu:s3}, the last expression can be re-written as 
\begin{align*}
&= \sum_{x'} \hat{P}(y \mid r, x'=(z', w'))P(x')
\end{align*}
where $r$ is sampled from $\hat{P}(R \mid x)$. Since Assumption~\ref{assu:s3} applies for any sampled value of $R$, we can average across samples of $R$,
\begin{align*}
&= \sum_{r} \hat{P}(r \mid x) \sum_{x'} \hat{P}(y \mid r, x')P(x'),
\end{align*}
concluding the proof. 
\end{proof}

\begin{algorithm}[t]
\caption{Causal-Transportability Model Training}
\label{algorithm: FDtrain}
\begin{algorithmic}[1]
\STATE {\bfseries Input:} Training set $D$ over $\{(X, Y)\}$.
% \STATE {\bfseries Output:} Model $P'(Z|X)$ and $P''(Y|X,Z)$
% \STATE{Randomly initialize network $f_\theta$, or initialize network with pre-trained configuration}
\STATE{\textbf{Phase 1:} Compute $\hat{P}(R|X)$ from representation of VAE or pretrained model.}
% \STATE{Get new joint distribution as $D'=(X,Z,Y)$}

% \STATE{We denote the }.
\STATE{\textbf{Phase 2:}}
\FOR{$i=1,...,K$}
\STATE{Sample $x_i, r_i, y_i$ from the joint distribution $D'=(X,R,Y)$}
\STATE{Random sample $x_i'$ from the same category as $x_i$}
\STATE{Train $\hat{P}(Y|X', R)$ via minimizing the classification loss $\mathcal{L}$ through gradient descent.}
\ENDFOR
\STATE {\bfseries Output:} Model $\hat{P}(R|X)$ and $\hat{P}(Y|X,R)$

% \STATE{$\theta\leftarrow \theta-\eta_2 \sum_{i=1}^m \nabla_\theta[\cL(f_\theta(\x_i),\y_i)+\cL(f_\theta(\x_i),f_\theta(\x_i'))/\lambda]/m$}

\end{algorithmic}
\end{algorithm}

\begin{algorithm}[t]
\caption{Causal-Transportability Effect Evaluation}
\label{algorithm: FDinfer}
\begin{algorithmic}[1]
\STATE {\bfseries Input:} Query $x$, training distribution $D$ over $\{(X, Y)\}$, model $\hat{P}(R|X)$ and $\hat{P}(Y|X', R)$, the sampling time $N_i$ for the representation variable $R$, and the sampling time $N_j$ for $X'$.
% \STATE{Randomly initialize network $f_\theta$, or initialize network with pre-trained configuration}
% \STATE{\bfseries Inference:}
% \STATE{$\x'\leftarrow \x+\n$, where $\n$ is the initial random noise}
\FOR{$i=1,...,N_i$}
\STATE{$\r_i\leftarrow \hat{P}(r|x)$}
\FOR{$j=1,...,N_j$}
\STATE{Random sample $\x_{ij}'$ from Training Distribution $D$.}\STATE{Compute $\hat{P}(Y|x'_{ij}, r_i)$}
\ENDFOR
\ENDFOR
\STATE{Calculate the causal effect $P(y|\doo(X=x)) = \sum_i \hat{P}(r_i|x) \sum_{j} \hat{P}(y|r_i, x_{ij}')P(x_{ij}')$}
\STATE {\bfseries Output:} Class $\hat{y}=\text{argmax}_y P(y|\doo(X=x))$.
\end{algorithmic}
\end{algorithm}

%We assume the first decomposition assumption is true based on recent study on robust and non-robust features~\cite{ilyas2019adversarial}. 

The intuition behind this derivation is that if the image $x$ can be decomposed into causal factors ($z$) and spurious factors ($w$), as shown in Fig.~\ref{fig:frontdoor}, then the causal effect is isolated in $z$, and $w$ can be ignored. By conditioning on $W=w'$, using another image, all the backdoor paths from $Z$ to $Y$ are  blocked, which leads to an identifiable result (i.e., without do-terms). That leaves the question of how to obtain the $z$ component from image $x$, and $w'$ from $x'$. The general idea behind assumptions \ref{assu:s2} and \ref{assu:s3}, and the last two lines of the derivation, is that $\hat{P}(Y \mid R, X)$ is able to extract all of the causal information $z$ from the representation $r$, and extract the spurious information $w'$ from the second image $x'$, which will happen through the design of the neural net.

% The intuition behind this derivation is that if the image $X$ can be decomposed into causal factors $Z$ and spurious factors $W$, as shown in Fig.~\ref{fig:frontdoor}, then we can focus on finding the causal effect of $Z$ (only its causal component) on $Y$. By conditioning on $W$, all backdoor paths from $Z$ to $Y$ can be blocked, which leads to an identifiable result. Interestingly, we note that even if $X$ can, in principle, be decomposed into $Z$ and $W$ components, this does not mean we are able to access those variables explicitly in the data. The general idea behind assumptions \ref{assu:s2} and \ref{assu:s3}, and the last two lines of the derivation, is that $\hat{P}(Y \mid R, X)$ can extract all of the relevant causal information $z$ from the representation $r$ and then use the spurious information $w$ from the second image, $x'$, through empirical design.

Altogether, Theorem~\ref{thm:causal-fd} allows us to estimate the causal effect through \footnote{Interestingly, the derivation of this expression is somewhat similar to the well-known identification strategy named the front-door criterion ~\cite[Sec.~3.3.2]{pearl:2k}. One of the key assumptions made by the front-door is that there exists a variable $M$ that acts as an (unconfounded) mediator between $X$ and $Y$. In spirit, $R$, our deep representation, resembles $M$. Despite the syntactical appearances, the variable $R$ in the case here is not exactly a mediator, in the original sense, since it acts as a proxy for both $X$ and $Z$. }:
\begin{eqnarray}\label{eq:fd}
P(y | \doo(X=x)) = \sum_r \hat{P}(r | x) \sum_{x'} \hat{P}(y | r, x') P(x')
\end{eqnarray}

To use this formula, we need to construct the neural models to satisfy the three assumptions and properly estimate $P(X)$, $\hat{P}(R|X)$, and $\hat{P}(Y|X,R)$. The term $P(X)$ is straightforward to calculate because we can assume it is sampled from a uniform distribution~\cite{shalizi2013advanced}. The other terms, however, require a more careful construction so as to satisfy the aforementioned assumptions, which are discussed in the following sections.

\subsection{Constructing  \boldmath$P(R|X)$\unboldmath}\label{sec:FDconstruction}

We discuss some classes of models that are valid ways of estimating $\hat{P}(R|X)$ while satisfying Assumption~\ref{assu:s2}.

%We show that there are some classes of models that are valid ways to estimate $P(R|X)$ while satisfying the \emph{second assumption} for Theorem~\ref{thm:causal-fd}.
% We find that there is a class of existing representations satisfy or can be modified to satisfy those three conditions. The self-supervised representation from variational auto-encoder, and the self-supervision from contrastive learning. 

% While we cannot estimate the causal effect from the input image $X$ to the output category $Y$ from purely observation data, we can estimate the causal effect from the representation of the input to the output. This is achieved by condition also on the input image $X$ when fitting the classifier. Since the representation is calculated from the input image $X$ only, it is not directly caused by the unseen latent representations. 

% \cv{need to fix Z here}

\begin{table}[t]
\scriptsize
\centering
\begin{tabular}{l|c|c}
         \toprule
         & \multicolumn{2}{c}{Test Accuracy} \\
         &  In-distribution  &  Out-of-distribution\\
         \midrule  
         Chance & 10.0\% & 10.0\%  \\
         ERM \cite{vapnik1992principles} & 99.5\% & 8.3\%  \\
         IRM* \cite{arjovsky2020invariant} & 87.3\% & 18.5\% \\
        %  Observational CVAE \cite{CVAE} & 59.949\%& 11.255\%\\
         
        %  VAE+ERM & 95.12\% & 35.18\% \\
        %  JiGEN \cite{JiGen}& 99.1\% & 10.3\%\\
        %  M-ADA \cite{M-ADA} & \textbf{99.9\%} & 12.7\%\\
        %  DG-MMLD \cite{dg_mmld} & 99.1\% & 12.5\% \\
         RSC \cite{huangRSC2020} & 96.6\% & 20.6\% \\
         GenInt \cite{mao2021generative} & 58.5\%& 29.6\%  \\
         \midrule
         Ablation & 97.4\% & 38.8\% \\
         Ours & 82.9\% & \textbf{51.4\%} \\
        %  Original Data + GAN  &  \\
        %  Original Data + Intervened GAN  & & & \\
         \bottomrule
    \end{tabular}
\caption{\small{Accuracy on the CMNIST dataset. Our method advances the state-of-the-art GenInt \cite{mao2021generative} method by over 20\% on the out-of-distribution test set. }} \label{tab:colorfulmnist}
\vspace{-5mm}
\end{table}

\textbf{Variational Auto-Encoder} (VAE) \cite{kingma2014autoencoding} is an unsupervised representation learning approach, which aims to estimate a latent distribution $R$ that can faithfully generate the input distribution. It maximizes the evidence lower bound for the distribution of $X$: $\mathcal{L} = -D_{KL}(q_E(r|x^{(i)})||p_{\theta}(r)) + \mathrm{E}_{q_E(r|x^{(i)})}[\log{p_{\theta}(x^{(i)}|r)}]$, where $E$ is the encoder in the VAE.  
As VAEs are optimized to reconstruct input images via the term $\mathrm{E}_{q_E(r|x^{(i)})}[\log{p_{\theta}(x^{(i)}|r)}]$, the representation $R$ should contain all the causal information from the input images, satisfying Assumption~\ref{assu:s2}.

% and therefore $Z$ can block all the information from $X$ to $Y$. The representation $Z$ is calculated via $q_E(z|x^{(i)})$ using only $X$, which is not directly caused by the confounders. VAE models are sampled from the latent distribution with additional randomness that is independent of the input image $X$, which means it is not a direct copy of the input $X$ and enables us to estimate the term $P(z|x)$ in the front-door criteria. 

% The latent representation $Z$ of VAE models naturally satisfies the three properties for the front-door criteria. 

% \hao{May help readers if we can echo Eq (1) and explicitly mention $Z$ and $P(z|x)$ somewhere in the paragraph.}
% \cv{Do you need this next sentence? I found it convincing, but then you imply VAE is not ideal.}
% In practice, even if VAE training is not ideal, visualization of the decoded images from $Z$ often show shape information that allows human to discriminate the category, which suggests that $Z$ contain enough causal signal for predicting category $Y$. 
% However, as it is challenging to train VAE model on large dataset with high quality, we introduce another popular family of models. 

\textbf{Constrastive Learning} is another unsupervised learning approach that produces representations that can align views of the same image while separating views of different images. 
Given enough negative examples, contrastive learning will produce representations that are invariant under data augmentation, which still maintains all causal information from the input images, also satisfying Assumption~\ref{assu:s2}.

% Since representations in contrastive learning are calculated via a deep network using only $X$, they are not directly caused by the confounders. In addition, by random sampling the representation, we can make sure $Z$ is not a direct copy of the input image $X$.

% This means that contrastive representations have larger mutual information with the predicted label $Y$.

\textbf{Pretrained models from larger dataset.} Empirically, deep neural networks show better generalization when pretrained from large datasets. This suggests that their representation $R$ does not drop robust features for classification and keeps the information about $Z$, satisfying Assumption~\ref{assu:s2}.

\subsection{Constructing \boldmath$P(Y|R, X)$\unboldmath}\label{sec:PYconstruction}
To properly evaluate Eq.~\ref{eq:fd}, we also need to estimate a $\hat{P}(Y|R, X)$ such that Assumption~\ref{assu:s3} is satisfied. We discuss some neural network designs to achieve this.

\textbf{Model Design for \boldmath$\hat{P}(Y|R, X)$\unboldmath.} In addition to the representation $R$, we use as input a bag of patches, which are subsampled from input image $X$ into the branch that takes the input $X$. A bag of image patches corrupts the global shape information and often contains local features that are spurious, such as color, texture and background~\cite{mao2021discrete}. During training, the causal features $Z$ in the image tend to be ignored by the read-out model. Specifically, we have $\hat{P}(Y|R \sim \hat{P}(R \mid Z, W), X=(Z, W)) = \hat{P}(Y|R \sim \hat{P}(R \mid Z, W), W)$. During training, the image $X$ and the representation $R$ are sampled from the same instance. During testing, the image $X$ can be sampled from an arbitrary instance.

The model $\hat{P}(Y|R, X)$ has limited capacity. Given that the model has learned the information about $W$, learning $W$ from $R$ again will not further decrease the empirical loss.  Thus, the model will learn $Z$ from the representation $R$ and ignore the $W$ from the representation. In addition,  The pretrained representations $R$, such as the ones from contrastive learning, can reduce the (labeled) sample complexity on classification tasks \cite{arora2019theoretical} than on raw image input, which allows the model to learn $Z$ from $R$ efficiently. This satisfies Assumption~\ref{assu:s3}.

By limiting the capacity of  $\hat{P}(Y|R, X)$, the model tends to use low-level features from the input images $X$ while using high-level deep features from the latent representation $R$. Traditional correlation-based approaches only use $\hat{P}(Y|R)$, which can also include spurious features such as the texture and backgrounds from the representation $R$. With our approach, the low-level spurious features tend to be learned by the model that conditions on the input $X$, and the model will discard those features after marginalizing over the variable $X$.

\begin{table}[t]
\centering
    \centering
    \scriptsize
    \begin{tabular}{l|c|c|c|c}
         \toprule
        %  Method & Train (avg) & Test (avg) & Test (worst group) \\
         Method & Domain ID& Train  & I.I.D & OOD \\
         \midrule
         
        GDRO* \cite{GDRO} & Yes & 100.0\% & \textbf{97.4\%} & 76.9\% \\
        \midrule
        ERM & No &100.0\% & 97.3\% & 52.0\%\\
        RSC & No & 92.2\% & 95.6\% & 49.7\% \\
        % GenInt & No & \\
        Ablation & No & 99.4\% & 96.8\% & 71.6\%\\ 
         Ours & No & 99.4\% & 96.8\% & \textbf{77.9\%} \\ % use dropout rate of 0.9 at inference only. Training are normal dropout
         
         \bottomrule
    \end{tabular}
\caption{Accuracy on the WaterBird dataset. Our causal method improves ERM model's worst group OOD generalization significantly. Our approach achieves performance on par with group invariant training (GDRO) without needing the domain index.} \label{tab:waterbird}
\vspace{-2mm}
\end{table}

\begin{table}[t]
\scriptsize
\centering
\begin{tabular}{l|c|c|c}
         \toprule
         & \multicolumn{3}{c}{OOD Test Accuracy} \\
         &  Moco-v2  &  SWAV & SimCLR\\
         \midrule  
         ERM \cite{vapnik1992principles} & 14.59\% & 20.00\% & 27.73\%\\
         Ablation & 17.04\% & 20.25\% & 28.44\% \\
         Ours & \textbf{18.02\%} & \textbf{20.42\%} & \textbf{29.41\%}\\
        %  Original Data + GAN  &  \\
        %  Original Data + Intervened GAN  & & & \\
         \bottomrule
    \end{tabular}
\caption{\small{Accuracy on the Imagenet-9 adversarial backgrounds.}} \label{tab:i9}
\vspace{-5mm}
\end{table}

\begin{table*}
\centering
\scriptsize
    \centering
    \begin{tabular}{l|c|c|c|c||c|c|c|c}
         \toprule
          & \multicolumn{4}{c||}{ImageNet Rendition} & \multicolumn{4}{c}{ImageNet Sketch} \\
          Algorithm &  ERM  & RSC  &  Ablation  & Ours & ERM  & RSC  & Ablation & Ours\\ %% Ours , GenInt+Ours
         \midrule
         Moco-v2 & 26.92\% & 26.14\% &   25.96\% & \textbf{28.70\%} &  17.29\%  & 16.43\% &   14.11\% & \textbf{19.09\% }\\
         SWAV & 31.77\% &  30.47\% &  30.32\% & \textbf{33.32\%} &  21.51\% & 21.03\%   & 17.26\% & \textbf{22.48\%} \\
        SimCLR & 37.82\%  & 34.06\% & 35.74\% & \textbf{38.25\%} &  27.43\% & 19.26\%   & 24.90\% & \textbf{29.51\%} \\
        \midrule
        ResNet50 & 25.02\% & \textbf{33.34\%}   & 30.96\% & 32.22\%  & 14.45\% & 22.54\%  &  19.19\% & \textbf{22.57\%}  \\
        ResNet152 & 30.53\% & \textbf{37.86\%}   & 34.94\% & 36.07\% &  18.53\% &  26.60\%  & 24.61\% &\textbf{27.07\%}\\
        ResNet101-2x & 31.44\% & 35.50\% &   35.82\% & \textbf{36.70\%} &  19.92\% & 26.38\% &   25.07\% & \textbf{27.41\%} \\ 
        
         \bottomrule
    \end{tabular}
\caption{\small{Robust accuracy on ImageNet-Rendition and ImageNet-Sketch. For contrastive learning based representations, our model achieves improved robustness than standard ERM and the state-of-the-art RSC approach. On superivsed learning representations, the representation may fail to capture all the causal information, where RSC method out-performs ours on two variants on ImageNet Rendition. Overall, our method improves robustness by estimating the causal effect from the representation.}} \label{tab:ImageNetOOD}
\vspace{-5mm}
\end{table*}

\begin{figure*}
    \centering % <-- added
\begin{subfigure}{0.22\textwidth}
  \includegraphics[width=\linewidth]{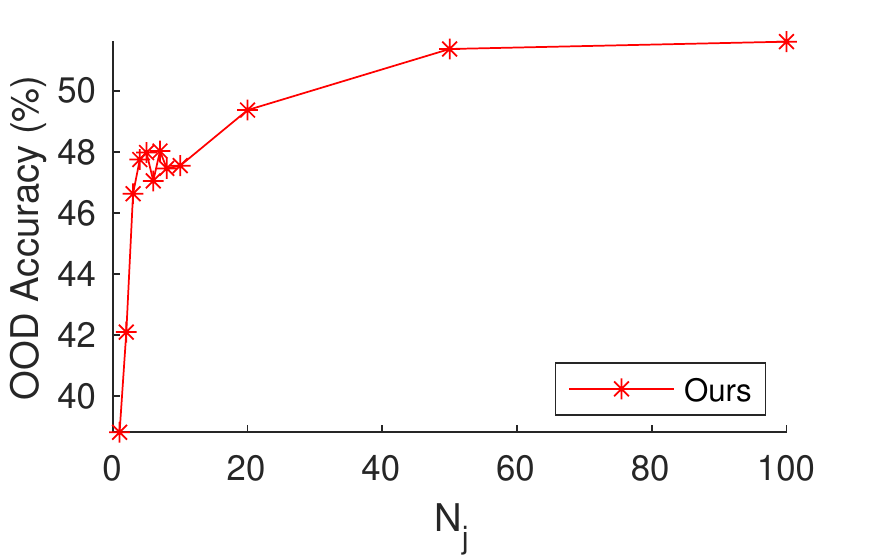}
  \caption{CMNIST, $K=10$}
  \label{fig:birdonpole}
\end{subfigure}
\begin{subfigure}{0.22\textwidth}
  \includegraphics[width=\linewidth]{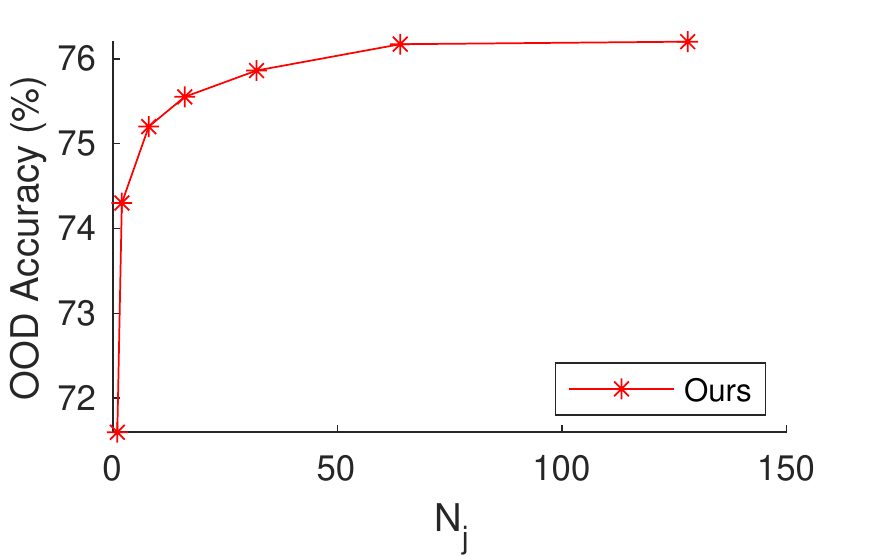}
  \caption{WaterBird, $K=2$}
  \label{fig:segmap}
\end{subfigure}
\begin{subfigure}{0.22\textwidth}
  \includegraphics[width=\linewidth]{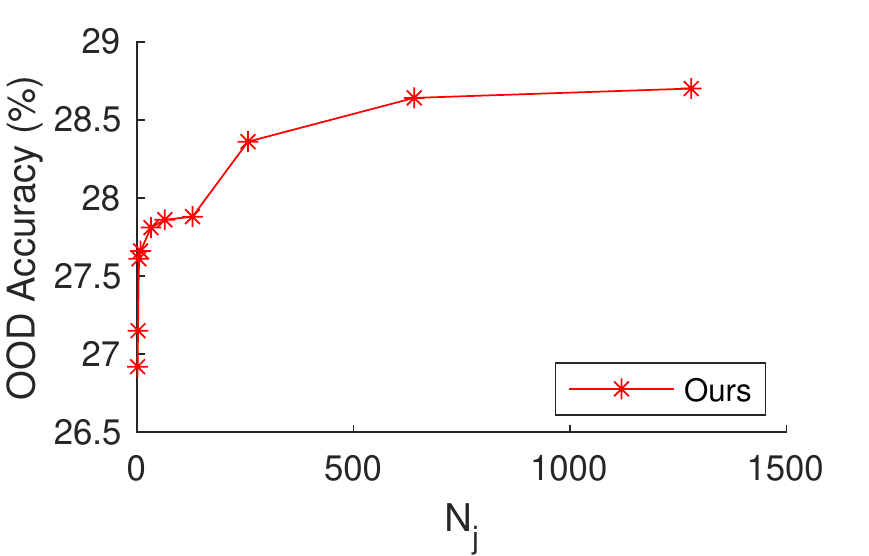}
  \caption{Rendition, $K=1000$}
  \label{fig:segmap}
\end{subfigure}
\begin{subfigure}{0.22\textwidth}
  \includegraphics[width=\linewidth]{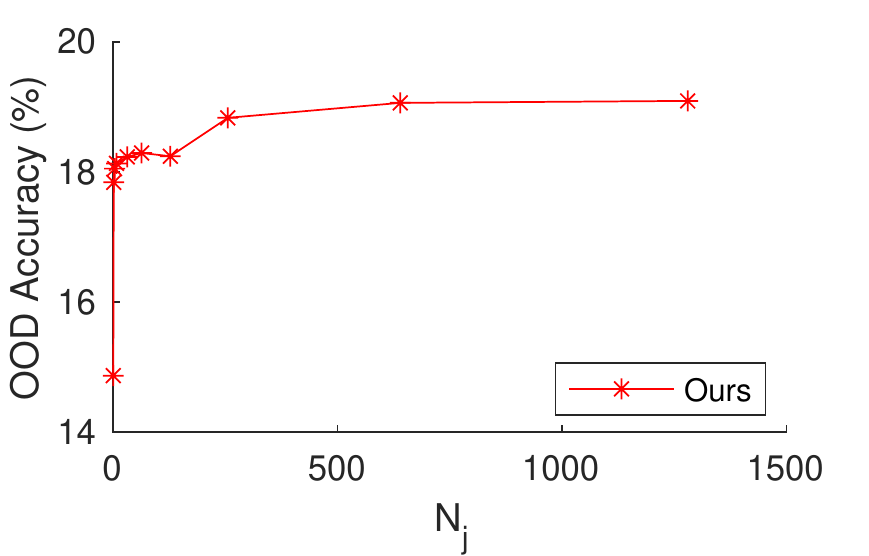}
  \caption{Sketch, $K=1000$}
  \label{fig:segmap}
\end{subfigure}
\vspace{-3mm}
\caption{OOD generalization accuracy under different number of $N_j$. At inference time, by increasing $N_j$ that samples more images $X'$, OOD generalization improve because the spurious correlation is better removed through our approach.}
\vspace{-4mm}
\label{fig:ablation_nj}
\end{figure*}

\begin{figure*}
\centering
  \includegraphics[width=0.9\linewidth]{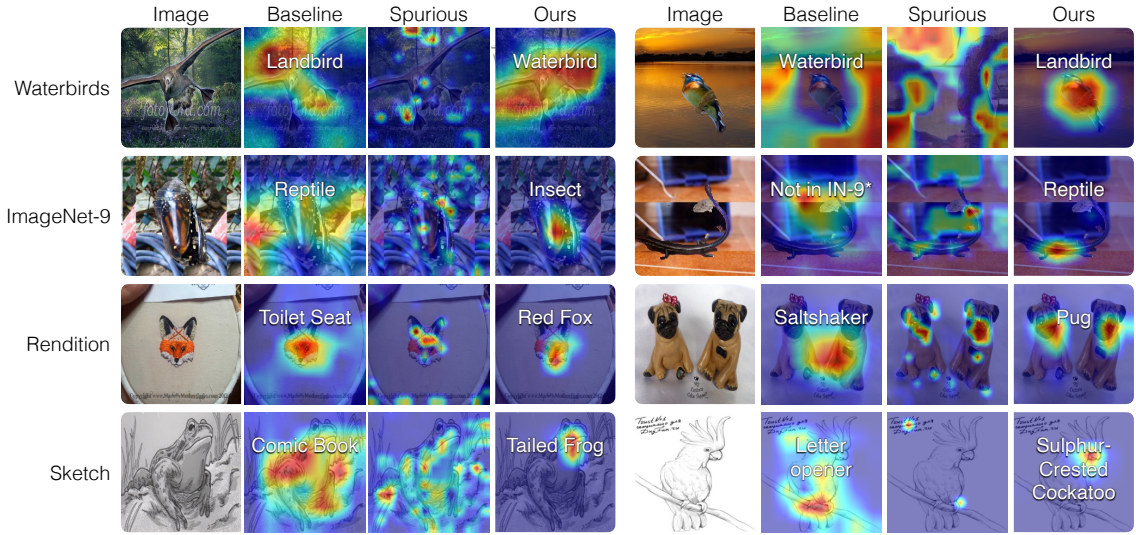}
  \vspace{-3mm}
  \caption{We visualize the input regions that the models use for prediction. We use GradCAM \cite{selvaraju2017grad} and highlight the the discriminative regions that the model relies on with red. The white text shows the model's prediction. The correlation based ERM method often attends to spurious background context. By marginalizing over the spurious features (visualized in the Spurious column), our model captures the right, causal features, which predict the right thing for the right reason.}  \label{fig:gradcam}
  \vspace{-4mm}
\end{figure*}

\subsection{Algorithm}
We describe our training procedure in Algorithm \ref{algorithm: FDtrain}. In the first phase, we estimate $\hat{P}(R|X)$, where we either train a representation with our proposed VAE or contrastive learning approach, or we use representations from a pretrained deep model. In the second phase, we train $\hat{P}(Y|X,R)$ where we sample random images $X$ from the same category as the representation $R$.
We describe our inference procedure in Algorithm \ref{algorithm: FDinfer}, where we infer the $P(y|\doo(X=x))$. We first randomly sample $R$. Then, for each $R$, we sample images $X$ from random categories. Finally, we make the prediction through Theorem~\ref{thm:causal-fd}.

% \subsection{Imperfect Representations}\label{sec:imperfect}
% \begin{wrapfigure}{r}{0.23\textwidth}
%   \includegraphics[width=1\linewidth]{latex/}
%   \caption{Causal diagram under imperfect front-door variable. }  \label{fig:imperfect}
% \end{wrapfigure}

% Even when a supervised learning algorithm cannot guarantee to contain all the causal information from the input $X$ to predict the output $Y$, our approach can still discard the incorrect spurious features from the imperfect representation.
% We consider such causal graph in Figure \ref{fig:imperfect}. 
% We assume $X$ can be separated into two sub-variables, $X_1$ and $X_2$, where $X_1$ denotes the information that learned by our representation, and $X_2$ denotes the information that is missed. The learned representation blocks all the information that flows from $X_1$ to the output $Y$. There is another causal path from $X_2$ to $Y$ that is unblocked. Thus, the traditional way of using representation estimates the result via $P(Y|X_1)$, and our algorithm estimates via $P(Y|\doo(X=x_1))$. While both of approaches cannot capture the causal effect $P(Y|\doo(X=x_2))$, our approach can capture a subset of the causal features.

% \subsection{Interpretation}

\vspace{-2mm}
\section{Experiment}

\subsection{Datasets}
\textbf{CMNIST.} We use the more challenging setup of colored MNIST dataset with 10 categories \cite{mao2021generative}. The function $F_X(U_x, U_{xy})$ will combine digits with different background colors from the training domain, creating an out-of-distribution (OOD) dataset.
\textbf{WaterBird} dataset \cite{GDRO} contains two classes of foreground birds, the waterbird and the landbird, and two types of backgrounds: water and land. The testing is OOD to the training because of the different mechanisms in combining the foreground and background.
% \textbf{PACS} \cite{PACS} contains seven classes over four domains (Art, Cartoon, Photo, Sketch). Models are evaluated by training on three domains and test on the fourth domain.
% \textbf{VLCS.} \cite{VLCS} contains five classes over four domains. Models are evaluated by training on three domains and test on the fourth domain.
\textbf{ImageNet-Rendition} \cite{Rendition} has renditions of 200 ImageNet classes, including art, cartoons, etc, which is an OOD test set for ImageNet. 
\textbf{ImageNet-Sketch} \cite{wang2019learning} contains sketch of 1000 ImageNet classes, which evaluate classifiers' robustness without texture and color clue. 
% \textbf{ImageNet-9} \cite{advBG} has 8 variations of ImageNet validation set to measure the impact of backgrounds. It contains Mixed-Same, Mixed-Rand, Mixed-Next, etc.
\textbf{ImageNet-9 Backgrounds Challenge} \cite{advBG} studies the classifier's vulnerability to adversarially chosen backgrounds on ImageNet.

% $^*$ IN-9 Predited ImageNet class was not in subset of ImageNet-9 class.

\subsection{Baselines}

Our paper studies generalization on the out-of-distribution test set without domain index for training samples. We compare with the following baselines:

\emph{ERM} \cite{vapnik1992principles, domainGeneralization} is the standard way to train deep network classifiers.
% We use the same data augmentation as DomainBed \cite{domainGeneralization}. 
% \emph{JiGEN} \cite{JiGen} uses self-supervision to improve generalization.
% \emph{M-ADA \cite{M-ADA}} uses adversarial training to anticipate OOD populations for generalization.
% \emph{DG-MMLD }\cite{dg_mmld} uses the EM algorithm to discover latent domains, and then trains domain-invariant model using the generated pseudo domain labels.
\emph{GenInt} \cite{mao2021generative} learns a causal classifier by steering the generative models to simulate interventions. 
\emph{RSC} \cite{huangRSC2020} uses representation self-challenging to improve generation to the OOD data, where features that are significant in ERM will be punished.
We also compare with the popular IRM \cite{arjovsky2020invariant} which uses domain index information.

\subsection{Experimental Settings}
We construct the low capacity network $\hat{P}(Y|X', R)$ with 3 random convolution layers applied to a bag of patches from $X'$, concatenating the obtained feature with $R$, and then using 2-layer fully connected network to predict $Y$. Except for CMNIST where the input is low dimension and we do not use convolution layer. We set $N_j=256$ and $N_i=10$ for all experiments and denotes it as \textbf{Ours}. We also conduct a variant with $N_j=1$ and $N_i=1$ and denote it as \textbf{Ablation}, where everything is the same as `Ours' but the inference procedure is a traditional single forward pass.
For CMNIST and WaterBird datasets,  we select the model with the highest validation accuracy. For ImageNet-Rendition and ImageNet-Sketch, we report the best validation accuracy as there is no validation/test split available.
% The front-door variable is only valid when it is not a direct copy of the input, we achieve this by two methods: 1) adding random Gaussian noise to the output representations, 2) apply additional data augmentation to a copy of the input and calculate the corresponding representations.

% The only exception is for the simpler CMNIST, which we direct use the input $X'$.

\subsection{Results on Simulated Datasets}
\textbf{CMNIST.} Our approach uses the latent representation from VAE to construct the representation variable. We report the accuracy in Table \ref{tab:colorfulmnist}.  Our method outperforms existing methods including the causal GenInt method by over 20\%.

\textbf{WaterBird.}
Following prior work, we use the representation from a pre-trained ResNet50.
We train the model for 10 epochs. In Table \ref{tab:waterbird}, without using domain index information, our causal approach improves the out-of-distribution test performance by over 25\% compared with ERM, and even 1\% higher than the state-of-the-art GDRO \cite{GDRO} method which uses domain index information.

\textbf{ImageNet-9 Adversarial Backgrounds}. We assess our model's robustness on testing distributions where the foreground and the background are manipulated to be different from the training distribution. In Table \ref{tab:i9}, we experiment on three variants of contrastive loss based self-supervised learning approaches, including Moco-v2\cite{mocov2}, SWAV \cite{swav}, and SimCLR \cite{chen2020simple}. Overall, our approach performs better when the foreground object is present even if the background is changed.

% We report how well our method can detect if the foreground is vulnerable from the ImageNet Backgrounds Challenge in Table \ref{tab:i9}. We compare our method with linear readout method, denoted with Baseline. Causal denotes our full algorithm. When $N_i = 1$ and $N_j = 1$, the spurious correlation in the model is not removed, denoted as Ablation. Our results show that after using the front-door algorithm which marginalizes over the sampled images, our model also increases robustness to adversarial samples. 

\subsection{Real-world Out of Distribution  Generalization}

% \textbf{PACS} and \textbf{VLCS} are two major OOD domain generalization datasets. Following prior work \cite{stablelearning, JiGen, M-ADA}, we use the penultimate representation layer from a pre-trained ResNet-18 \cite{ResNet} model.   We train our model on a mixture of any three of the domains (we do not use the domain index information) and test on the fourth domain. We show results in Table \ref{tab:DG_pacs_vlcs_res18}. Overall, our causal approach outperforms the baselines on both datasets. Our results show that our causal approach is more robust in generalizing to new distributions.

% We use a three-layer convolutional neural network to consume the randomly sampled $X'$ in the front-door computation, and a two fully-connected model to predict the label $Y$ based on both sampled front-door variable $Z$ and the convolutional feature from $X'$.

% Background generalization, CMNIST, WaterBird, and I9.

% Doamin generalization \textbf{PACS}, \textbf{VLCS},

% Single training domain out of distribution generalization, Rendition, 

\textbf{ImageNet-Rendition} and \textbf{ImageNet-Sketch} are two OOD test sets for ImageNet. We study the representation from contrastive-loss-based self-supervision learning approaches including SimCLR, MoCo-v2, and SWAV. In addition, we also study the representations from supervised learning, though they may be imperfect representations. We show results in Table \ref{tab:ImageNetOOD}.  Our algorithm estimates the causal invariance, which improves OOD generalization. The exception is that the supervised trained models, ResNet50 and ResNet152, are not trained with contrastive learning and therefore may lose causal information.

\subsection{Analysis}

\textbf{Importance of Image Sampling.} Our approach requires marginalizing over random input images $x'$ at inference time. Sampling fewer $x'$ can speed up the inference, however, at a cost of not estimating the accurate causal effect. In Figure \ref{fig:ablation_nj}, we vary the number of samples $N_j$ and test the performance on four datasets. In general, We find for datasets with $K$ categories, using $N_j>K$ can significantly improve generalization.

% \begin{figure}[t]
% \vspace{-8mm}
% \centering
% \subfloat[CMNIST]{\label{aasxs}\includegraphics[width=0.24\textwidth]{latex/}}
% \subfloat[WaterBird]{\label{fig:tsne_natural_two_clusters}\includegraphics[width=0.24\textwidth]{latex/}}
% \caption{\textbf{$N_j$ versus Robust Accuracy:} . }
% \label{fig:trade-time}
% \vspace{-3mm}
% \end{figure}

% \textbf{Trade-off between out-of-distribution accuracy and in-distribution accuracy.} Our causal algorithm ignores the spurious correlations that can positively contribute to the generalization when the test data are sampled from the same distribution as the input. We show how much performance drops on clean accuracy when the robust accuracy improves.

% \textbf{Capacity of Model $P(Y|X,Z)$.} 

\textbf{GradCam Visualization.} Using the criterion derived in the previous section, we expect our model to attend to the spatial regions corresponding to the object, instead of the spurious context. In Figure \ref{fig:gradcam}, we validate this by visualizing the regions that the models use for classification with the GradCAM \cite{selvaraju2017grad}. We examine four datasets, including the WaterBird, ImageNet-9, ImageNet-Rendition, and ImageNet-Sketch. We visualize the ERM model in the `Baseline' column, the branch that conditions on the variable $X$ of model $P(Y|R, X)$ in the `Spurious' Column, and our causal method in `Ours'. By discarding the information in the `Spurious' model through marginalizing over $X'$, our model focus on the right object for prediction.

% We visualize the input regions that the models use for prediction. used by traditional ERM method, the spurious features that are captured by the $X$ branch in $P(Y|X,Z)$, and the causal features that are used by our method $P(Y|\doo(X))$. In Figure \ref{fig:gradcam}, we plot the  GradCAM \cite{selvaraju2017grad} attention map on the out-of-distribution test sets. While traditional method uses correlation which causes them to make the wrong prediction, our approach refrain the spurious features and use the causal features. Our algorithm allows us to prediction the right thing with the right reason.

\section{Conclusions}

Generalization is a fundamental problem in visual recognition. This paper uses causal transportability theory to revisit and formulate the problem of out-of-distribution classification, since associational relations are not generalizable across domains. Our results demonstrate improved out-of-distribution robustness on both simulated and real-world datasets. Our findings suggest integrating causal knowledge and tools into visual representations is a promising direction to improve generalization.

{\small
\textbf{Acknowledgments:} CM, JW, and CV are partially supported by DARPA SAIL-ON and DARPA GAIL. CM and JF are partially supported by DiDi Faculty Research Award, J.P. Morgan Faculty Research Award, Accenture Research Award, ONR N00014-17-1-2788, and NSF CNS-1564055. EB and KX are partially supported by NSF, ONR, Amazon, JP Morgan, and The Alfred P. Sloan Foundation. HW is partially supported by NSF Grant IIS-2127918 and an Amazon Faculty Research Award. 
% This research is based on work partially supported by DARPA SAIL-ON, DARPA GAILA, DiDi Faculty Research Award, J.P. Morgan Faculty Research Award, Accenture Research Award, ONR N00014-17-1-2788, and NSF CNS-1564055. Hao Wang is partially supported by NSF Grant IIS-2127918 and an Amazon Faculty Research Award. 
}

%%%%%%%%% REFERENCES
{\small
\bibliographystyle{ieee_fullname}
\bibliography{egbib}
}

\end{document}